\documentclass[sigconf,nonacm,balance=false]{acmart}
\usepackage{amsmath}
\usepackage{subcaption}
\usepackage{tikz}
\usetikzlibrary{arrows.meta,positioning,calc,fit}
\newcommand{\CCSPrimaryWeight}{500}
\newcommand{\E}{\mathbb{E}}

\setcopyright{none}
\acmDOI{}
\acmISBN{}
\acmYear{2026}
\copyrightyear{2026}
\acmConference{}{}{}
\title{The Scheduler Is a Hidden Retriever: How Asynchrony Shapes Evidence Acquisition in Search Agents}
\makeatletter
\let\CEA@defaultmkauthors\@mkauthors
\renewcommand{\@mkauthors}{%
  \begingroup
  \let\CEA@defaultfnsymbol\@fnsymbol
  \def\@fnsymbol##1{%
    \ifnum##1=2\relax
      \hbox{\fontsize{7}{7}\selectfont\textdagger}%
    \else
      \CEA@defaultfnsymbol{##1}%
    \fi}%
  \CEA@defaultmkauthors
  \endgroup}
\makeatother
\author{Ruiyang Ren}
\authornote{These authors contributed equally.}
\email{reyon\_ren@outlook.com}
\affiliation{%
  \institution{Nanyang Technological University}
  \city{}
  \country{Singapore}}

\author{Yuhao Wang}
\authornotemark[1]
\email{yh.wang500@outlook.com}
\affiliation{%
  \institution{Nanyang Technological University}
  \city{}
  \country{Singapore}}

\author{Yinan Zhang}
\authornote{Corresponding authors.}
\email{yinan.zhang@ntu.edu.sg}
\affiliation{%
  \institution{Nanyang Technological University}
  \city{}
  \country{Singapore}}

\author{Chunyan Miao}
\authornotemark[2]
\email{ascymiao@ntu.edu.sg}
\affiliation{%
  \institution{Nanyang Technological University}
  \city{}
  \country{Singapore}}
\renewcommand{\shortauthors}{Ren et al.}

\renewcommand{\shorttitle}{The Scheduler Is a Hidden Retriever}
\ccsdesc[\CCSPrimaryWeight]{Information systems~Information retrieval}
\keywords{asynchronous search agents, schedule invariance, evidence integration, dependency domains}

\begin{document}
\begin{abstract}
Language-model agents increasingly solve complex tasks through parallel reasoning and asynchronous information gathering. Controlled replay across two model families and three benchmarks shows that delivery timing redirects search decisions made before all results arrive, despite identical initial evidence. Acquisition disagreement reaches up to 24 percentage points above same-order repeat variation. This dependence persists after all initial evidence arrives and through a second search, turning temporary differences in what agents have seen into lasting differences in what they acquire. An acquisition-regret bound with a matching construction shows that invariance at a fixed early arrival count can limit the use of evidence already received. We introduce Confluent Evidence Acquisition (CEA), which instead releases local reasoning when each subquestion's evidence is complete and allocates shared search independently of completion order. For a fixed plan and stable conditional response law, we prove joint invariance of decisions, acquisitions, and answers. With question-only plans across three benchmarks, CEA brings acquisition disagreement to within 1.0 percentage point of repeat variation and achieves accuracy comparable to complete-input controls, improving over early commitment by up to 12.6 percentage points. Replacing CEA's fixed allocation with first-ready allocation under contention adds 21.6 percentage points of acquisition disagreement, while local query-target disagreement remains near repeat variation. Timed comparisons show that CEA reaches decisions earlier and responds faster than the same program waiting for all initial evidence.

\end{abstract}

\maketitle
\section{Introduction}
\label{sec:intro}

Progress in reasoning and tool use enables language-model agents to tackle increasingly complex tasks requiring sustained information gathering across multiple subtasks~\cite{xu2026selfmanager}. Parallel search and asynchronous execution let these subtasks advance while tool results are still arriving~\cite{tan2025ragr1,hooper2026speculative}. In agentic search, reasoning and information acquisition evolve together, as an arriving result can guide a new query before other calls finish. Whether execution timing changes the evidence ultimately acquired is a reliability question for Web agents.

An intermediate fact can arrive in time for final reasoning yet too late to guide search. Figure~\ref{fig:mechanism} shows two recorded runs for a question comparing directors' death dates~\cite{ho2020twowiki}. Both receive the same original documents and have one search opportunity. With a director's name available early, one run retrieves his biography. The other spends its search identifying him. Both eventually receive the name, but only one acquires the date and answers correctly. The scheduler acts as a \emph{hidden retriever} by determining which supplied evidence can guide the next query, a mechanism we call \emph{temporal evidence selection}.

\begin{figure*}[t]
\centering
\definecolor{mechanismBlue}{HTML}{3B5F83}
\definecolor{mechanismOrange}{HTML}{B56349}
\definecolor{mechanismInk}{HTML}{26333C}
\definecolor{mechanismGray}{HTML}{75808A}
\begin{tikzpicture}[x=1cm,y=1cm,
 font=\sffamily\fontsize{8.5}{10}\selectfont,text=mechanismInk,
 heading/.style={font=\sffamily\fontsize{9.1}{10.6}\selectfont\bfseries,
   anchor=west,inner sep=0pt},
 subtitle/.style={font=\sffamily\fontsize{8}{9.2}\selectfont,
   text=mechanismGray,anchor=west,inner sep=0pt},
 detail/.style={align=left,anchor=west,inner sep=0pt},
 document/.style={rounded corners=.8pt,line width=.45pt,draw=mechanismGray!75,
   fill=white,minimum width=.40cm,minimum height=.42cm,inner sep=1pt,
   font=\sffamily\fontsize{8}{9.2}\selectfont},
 pending/.style={draw=mechanismGray!60,text=mechanismGray,
   dash pattern=on 1.2pt off 1.2pt},
 query/.style={rounded corners=1.6pt,line width=.5pt,align=center,
   text width=4.25cm,minimum height=1.02cm,inner sep=4pt},
 flow/.style={-{Stealth[length=1.5mm,width=1.0mm]},line width=.6pt,
   rounded corners=1.2pt}]

\node[anchor=west,font=\sffamily\fontsize{9.1}{10.6}\selectfont]
  at (.03,.87) {\textbf{Question}\quad Which film's director died earlier, \emph{The Coca-Cola Kid} or \emph{Ayul Kaithi}?};
\draw[mechanismGray!32,line width=.4pt] (.16,.48)--(17.20,.48);

\node[heading] at (.16,.12) {Asynchronous arrivals};
\node[heading] at (5.58,.12) {Search during delivery};
\node[heading] at (11.55,.12) {After delivery completes};
\node[subtitle] at (.16,-.22) {Arrival positions 1--6};
\node[subtitle] at (5.58,-.22) {After the 2nd arrival};
\node[subtitle] at (11.55,-.22) {All six original documents received};
\draw[mechanismGray!20,line width=.35pt] (.16,-1.96)--(17.20,-1.96);

\node[detail,font=\sffamily\fontsize{8.5}{10}\selectfont\bfseries,text=mechanismBlue]
  at (.16,-.58) {\emph{Ayul Kaithi} page arrives early};
\node[detail,font=\sffamily\fontsize{8.5}{10}\selectfont\bfseries,text=mechanismOrange]
  at (.16,-2.18) {\emph{Ayul Kaithi} page arrives late};
\fill[mechanismBlue!8,rounded corners=1.2pt] (.16,-1.22) rectangle (1.60,-.70);
\fill[mechanismOrange!8,rounded corners=1.2pt] (.16,-2.82) rectangle (1.60,-2.30);
\draw[mechanismGray!75,line width=.45pt] (.23,-.96)--(1.60,-.96);
\draw[flow,mechanismGray!70,dashed] (1.60,-.96)--(4.65,-.96);
\draw[mechanismGray!75,line width=.45pt] (.23,-2.56)--(1.60,-2.56);
\draw[flow,mechanismGray!70,dashed] (1.60,-2.56)--(4.65,-2.56);
\foreach \arrival/\xpos in {1/.46,2/1.20,3/1.94,4/2.68,5/3.42,6/4.16} {
  \ifnum\arrival=2
    \node[document,draw=mechanismBlue,fill=mechanismBlue!14,line width=.6pt,
      text=mechanismBlue,font=\sffamily\fontsize{8}{9.2}\selectfont\bfseries]
      at (\xpos,-.96) {\arrival};
  \else
    \ifnum\arrival<3
      \node[document] at (\xpos,-.96) {\arrival};
    \else
      \node[document,pending]
        at (\xpos,-.96) {\arrival};
    \fi
  \fi
  \ifnum\arrival=6
    \node[document,pending,draw=mechanismOrange,fill=mechanismOrange!8,line width=.6pt,
      text=mechanismOrange,font=\sffamily\fontsize{8}{9.2}\selectfont\bfseries]
      at (\xpos,-2.56) {\arrival};
  \else
    \ifnum\arrival<3
      \node[document] at (\xpos,-2.56) {\arrival};
    \else
      \node[document,pending]
        at (\xpos,-2.56) {\arrival};
    \fi
  \fi
}
\node[detail,font=\sffamily\fontsize{8.2}{9.6}\selectfont]
  at (.16,-1.70) {Director: K.\ Subash};
\node[detail,font=\sffamily\fontsize{8.2}{9.6}\selectfont]
  at (.16,-3.30) {Director not yet known};

\node[query,draw=mechanismBlue!60,fill=mechanismBlue!3] (queryearly) at (7.84,-1.31)
  {{\ttfamily\fontsize{8.4}{9.8}\selectfont K. Subash death date}\\[5pt]
   \textcolor{mechanismBlue}{Biography retrieved}};
\node[query,draw=mechanismOrange!60,fill=mechanismOrange!3] (querylate) at (7.84,-2.91)
  {{\ttfamily\fontsize{8.4}{9.8}\selectfont Ayul Kaithi film director}\\[5pt]
   \textcolor{mechanismOrange}{Biography not retrieved}};
\fill[mechanismBlue] (1.60,-.96) circle (.032);
\fill[mechanismOrange] (1.60,-2.56) circle (.032);
\draw[flow,mechanismBlue!85] (1.60,-.96)--(1.60,-1.31)--(queryearly.west);
\draw[flow,mechanismOrange!85] (1.60,-2.56)--(1.60,-2.91)--(querylate.west);
\draw[flow,mechanismBlue!85] (queryearly.east)--(11.30,-1.31);
\draw[flow,mechanismOrange!85] (querylate.east)--(11.30,-2.91);

\node[detail,text=mechanismGray,font=\sffamily\fontsize{8.2}{9.6}\selectfont]
  at (11.55,-.96) {Director: K.\ Subash};
\node[detail,font=\sffamily\fontsize{9}{10.5}\selectfont\bfseries,text=mechanismBlue]
  at (11.55,-1.30) {Death date available};
\node[detail] at (11.55,-1.67) {Answer: \emph{Ayul Kaithi}\quad (correct)};
\node[detail,text=mechanismGray,font=\sffamily\fontsize{8.2}{9.6}\selectfont]
  at (11.55,-2.56) {Director: K.\ Subash};
\node[detail,font=\sffamily\fontsize{9}{10.5}\selectfont\bfseries,text=mechanismOrange]
  at (11.55,-2.90) {Death date still missing};
\node[detail] at (11.55,-3.27) {Answer: \emph{The Coca-Cola Kid}\quad (incorrect)};
\end{tikzpicture}
\caption{Arrival timing redirects retrieval. Two recorded runs receive the same six documents in different orders. Each executes one search after the second arrival (shaded prefix). The highlighted film page identifies K.\ Subash, whose biography is absent from the packet. Both runs eventually identify him, but only the upper run acquires his death date.}
\Description{The question asks which of two films, The Coca-Cola Kid or Ayul Kaithi, has the director who died earlier. Two arrival sequences contain the same six original documents. Each sequence shows numbered document arrivals from one to six, with shading over the first two documents available at the search decision and dashed outlines for the later arrivals. The Ayul Kaithi film page is highlighted at arrival two in the upper sequence and arrival six in the lower sequence. In each row, an arrow branches from immediately after the second arrival into a search while the remaining documents are still in flight. In the upper path, the agent already knows the director's name and searches for K. Subash's death date, retrieving his biography. In the lower path, the director is not yet known, so the agent searches for the film's director and misses the biography. After all original documents arrive, both runs identify K. Subash. Only the upper path has his death date and correctly answers Ayul Kaithi; the lower path incorrectly answers The Coca-Cola Kid.}
\label{fig:mechanism}
\end{figure*}

We replay identical evidence packets in different delivery orders to examine \emph{early commitment}, when an agent decides whether and what to search before all results arrive. Across three benchmarks and two model families, early commitment produces acquisition disagreement up to 24 percentage points above same-order repeat variation. Presenting the received documents in a fixed order does not remove the effect, because the evidence available at commitment still differs. Timing also changes whether missing supporting evidence is recovered. Acquisition differences persist after all original evidence arrives and through a second search. In the two-search evaluation, early commitment improves accuracy but leaves acquisition more sensitive to timing. Average accuracy captures task success but cannot establish whether supporting evidence is acquired reproducibly, a distinction that matters when auditing an agent's conclusions~\cite{vashagh2026chronosattack}.

These findings motivate acquisition that is both useful and stable. We formalize \emph{schedule invariance} as an unchanged acquisition or answer distribution across delivery orders. Our acquisition-regret bound, with a matching construction, shows that enforcing invariance at a fixed early arrival count can restrict the use of informative evidence already received. A shared-budget counterexample shows that even invariant local proposals can yield different acquisitions when completion order determines which query executes. Under search contention, this allocation rule adds 21.6 percentage points of acquisition disagreement, while local query-target disagreement remains near repeat variation. Stable acquisition involves both the \emph{evidence dependencies} that govern query formation and the \emph{resource dependencies} that govern execution.

We introduce \emph{Confluent Evidence Acquisition} (CEA) to retain the stability and answer quality of complete-input reasoning while allowing asynchronous progress. Reasoning on each subquestion begins once its evidence is complete, without waiting for unrelated results. A schedule-independent allocation rule determines which proposed search executes. For a fixed plan and stable conditional response law, we prove \emph{epistemic confluence}, meaning that local decisions, executed acquisitions, and the final answer have a schedule-invariant joint distribution. With question-only plans and evidence assigned by retrieval-call ownership, CEA brings acquisition disagreement to within 1.0 percentage point of repeat variation across three benchmarks. Accuracy is comparable to complete-input controls and improves over early commitment by up to 12.6 percentage points. In timed comparisons, CEA reaches decisions earlier and responds faster than the same program waiting for all initial evidence.

We make three contributions.
\begin{itemize}
\item \textbf{Temporal evidence selection.} We isolate how delivery timing redirects retrieval and show why acquisition stability must be assessed alongside evidence quality and answer accuracy.
\item \textbf{Structural limits of early invariance.} We derive an acquisition-regret bound with a matching construction and show how shared-budget allocation introduces a separate source of instability.
\item \textbf{Dependency-scoped coordination.} CEA coordinates local evidence completion and shared search allocation, guaranteeing joint invariance for a fixed plan. It achieves accuracy comparable to complete-input controls with earlier decisions and faster responses.
\end{itemize}

\section{Problem and Evaluation Setup}
\label{sec:setup}

We replay fixed evidence packets over a frozen corpus to isolate delivery-order effects on acquisition and task success from changes in retrieval content or service latency.

\paragraph{Schedule invariance.}
Let $E$ be an initial evidence packet and $\pi$ its delivery order. An agent is \emph{schedule invariant} for a measured outcome if changing $\pi$, with the question, $E$, retriever, and model settings fixed, leaves the outcome distribution unchanged. We distinguish supplied evidence from evidence acquired through search and assess invariance for search decisions, retrieved document sets, and answers.

\paragraph{An evidence acquisition task.}
Answering each question requires evidence from multiple documents. We withhold one supporting document from the supplied packet while keeping it in the retrieval corpus. The six-item packet contains the remaining support and retrieved distractors. A dependent chain retains the bridge document but lacks an endpoint, whereas a comparison lacks support for one side. Recovering the withheld document requires search beyond this packet. Oracle and closed-book reference conditions respectively include the withheld document or supply only the question.

The corpus, excerpts, and content-derived identifiers are fixed before permutation. Deterministic BM25~\cite{robertson2009bm25} returns the same top-five documents for each query. Structural and retrievability checks select questions before inference (Appendix~\ref{app:protocol}).

\paragraph{Datasets and models.}
The primary evaluation uses DeepSeek-V4-Flash on 200 compositional and 200 bridge-comparison questions from 2Wiki~\cite{ho2020twowiki}, 100 HotpotQA bridge questions~\cite{yang2018hotpotqa}, and 100 MuSiQue questions~\cite{trivedi2022musique}. 2Wiki compositional and HotpotQA bridge questions connect evidence along one reasoning chain; 2Wiki bridge-comparison questions compare information from two independent chains. MuSiQue is evaluated without a question-type split. We test cross-model consistency with GPT-5.6 Luna on fixed 100-question subsets of both 2Wiki tasks and the same HotpotQA and MuSiQue cohorts. Supplementary experiments test reasoning controls, two-search acquisition, and budget allocation under low and high contention. Appendix~\ref{app:protocol} gives settings and coverage.

\paragraph{Acquisition from generated plans.}
For 100 fixed questions per benchmark (bridge comparison for 2Wiki), a question-only planner proposes one chain or two independent subquestions. Initial retrieval fills six slots, with each document assigned to the call that retrieved it. All policies use the same plan and packet across schedules. Support is neither inserted nor withheld, and annotations are used only for scoring (Section~\ref{sec:automatic-plans}).

\paragraph{Decision timing and evidence integration.}
The agent chooses \texttt{search(query)} or \texttt{answer(answer)}. We release the decision after two or six arrivals to compare commitment during delivery, as asynchronous runtimes permit~\cite{gim2024asynclm,hooper2026speculative}, with commitment after completion. This isolates evidence availability. Always waiting recovers the complete-input regime. Native strategies update notes in arrival order, whereas batch strategies use canonical identifier order. These yield \texttt{native@2}, \texttt{native@6}, \texttt{wait\_all@2}, and \texttt{wait\_all@6}. Prefix batches fix presentation order but vary which documents are available. Complete batches fix both. CEA reasons on completed domains and coordinates shared search allocation (Section~\ref{sec:method}).

\paragraph{Tracing an acquisition decision.}
We trace one acquisition decision through input completion with a one-search budget. Early trajectories retain their action, feedback, and notes. After six arrivals, a \emph{completed-input probe} records proposals without execution; final answering uses the completed history. The two-search extension allows another search after input completion and any first-search feedback. It measures the union of retrieved documents. Both policies keep the second search available after a provisional answer.

\paragraph{Evaluation measures.}
\emph{Target disagreement} compares query-target categories derived from lexical matches to annotated names and aliases. \emph{Jaccard distance}, $d(A,B)=1-|A\cap B|/|A\cup B|$, compares retrieved document sets; answering gives an empty set, and two empty sets have distance zero. Probe queries use the same retriever, but their results do not enter the history. Answer disagreement compares normalized strings.

We measure search rate, withheld-document recovery over all trajectories and per search, and normalized exact-match accuracy with dataset aliases (Appendix~\ref{app:scoring}). Recovery disagreement records whether runs differ in retrieving the withheld document. We also record readiness position, model calls, and tokens.

\paragraph{Timed execution.}
Paired runs compare local completion with a global barrier using the same program under controlled arrival profiles. Elapsed time runs from dispatch to the final response (Section~\ref{sec:method-evaluation}; Appendix~\ref{app:protocol}).

\paragraph{Estimating schedule effects.}
Permutation experiments evaluate each question under eight fixed delivery orders, with two independent runs per order. Each repeat uses a distinct prompt marker (nonce), shared across orders, so same-order repeats measure run-to-run variation at fixed settings (Appendix~\ref{app:reliability}). We separate schedule effects from repeat variation by subtracting same-order disagreement from cross-order disagreement. For an observed outcome $x_{\pi r}$ under order $\pi$ and repeat $r$, the matched contrast is
\begin{equation}
\begin{split}
\widehat\Delta_{\pi\sigma}
&=\tfrac12\{d(x_{\pi0},x_{\sigma1})+d(x_{\pi1},x_{\sigma0})\}\\
&\quad-\tfrac12\{d(x_{\pi0},x_{\pi1})+d(x_{\sigma0},x_{\sigma1})\}.
\end{split}
\label{eq:centering}
\end{equation}
The mean contrast measures excess disagreement in percentage points (pp). We use question-bootstrap 95\% confidence intervals~\cite{efron1979bootstrap} and pair strategy comparisons within each question. Estimation details appear in Appendices~\ref{app:protocol} and~\ref{app:reliability}.

\section{How Scheduling Shapes Information Acquisition}
\label{sec:measurement}

A search decision determines what additional evidence an agent acquires. We trace how schedule dependence enters acquisition, persists after input completion, and affects evidence quality, then characterize the cost of invariance at a fixed early arrival count.

\begin{figure*}[t]
\centering
\includegraphics[width=\textwidth]{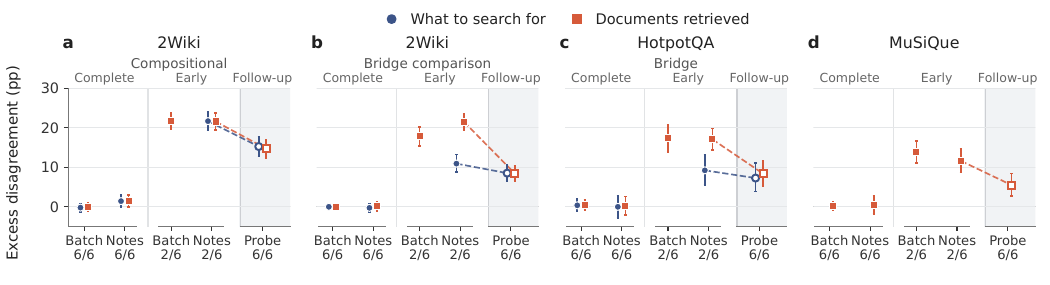}
\caption{Acquisition differences persist after input completion (DeepSeek V4). Batch reads documents in fixed order; Notes updates on arrival. Fractions count received documents. Circles compare what the agent searches for (target categories); squares compare returned documents (Jaccard distance). Dashed lines connect early Notes to its unexecuted probe (shaded, hollow). Zero marks same-order repeat variation; bars show 95\% CIs. Early Batch reports sets only; MuSiQue has no target categories.}
\Description{Four DeepSeek V4 panels in one row show 2Wiki compositional, 2Wiki bridge comparison, HotpotQA bridge, and MuSiQue. Five categorical conditions are grouped into complete-input controls, early decisions, and a follow-up probe. Batch reads received documents together in fixed order; Notes updates working notes in arrival order. Fractions count received original documents. Blue circles compare what the agent searches for, measured by disagreement between target categories. Orange squares compare the returned documents using Jaccard distance; early Batch reports sets only. MuSiQue has no target categories. The four first-decision conditions are independent comparisons. Dashed lines connect only early Notes to its follow-up probe, which retains the early action, notes, and any search feedback after all original documents arrive. Shading and hollow markers distinguish probes, whose queries imply retrieval sets without executing a second search or adding feedback. Zero denotes same-order repeat variation; bars are question-bootstrap 95 percent confidence intervals. Complete-input controls lie near repeat variation; early decisions and their probes retain excess disagreement.}
\label{fig:ladder}
\end{figure*}

\subsection{Scheduling Selects the Evidence for Search}
Delivery can affect a search decision through the evidence available at commitment or the order in which it is processed. Figure~\ref{fig:ladder} shows the primary analysis; Appendix Figure~\ref{fig:ladder-replication} reports the GPT replication. Batch 6/6 (\texttt{wait\_all@6}) presents all six documents in a fixed order, whereas Notes 6/6 (\texttt{native@6}) accumulates notes in arrival order. Both show little excess retrieval disagreement when all six documents are available for the first decision.

The pattern changes when decisions are made after only two arrivals (Notes 2/6, \texttt{native@2}). Retrieved-set disagreement exceeds same-order repeat variation by 11.6--23.8\,pp across all eight model--task settings. Different delivery orders expose the decision to different subsets of the same original packet. Because the retriever is deterministic and no retrieval feedback precedes this decision, the schedule effect enters through whether and what the agent chooses to search.

To distinguish the selected evidence from its presentation, Batch 2/6 (\texttt{wait\_all@2}) presents the first two arrivals together in document-identifier order. Substantial sensitivity remains in every tested setting. On 2Wiki compositional questions, the primary evaluation yields 21.7\,pp of excess disagreement, versus 21.6\,pp with incremental processing. The same pattern appears in the GPT replication, with early incremental/batch contrasts of 12.7/12.5\,pp on HotpotQA and 11.7/11.2\,pp on MuSiQue (Appendix Table~\ref{tab:expansion-details}). Sorting fixes presentation but cannot make evidence still in flight available to the decision.

Scheduling also changes access to useful support. Under early commitment, disagreement about whether the missing document is recovered exceeds the repeat baseline by 7.7\,pp on bridge comparison and 19.3\,pp on compositional questions in the primary evaluation (Appendix~\ref{app:readings}). These controls identify \emph{temporal evidence selection}, in which timing shapes the evidence used to form a query and redirects corpus access despite a fixed packet and retrieval rule.

\subsection{Input Completion Does Not Ensure Informational Convergence}
\label{sec:persistence}
Completing delivery gives agents the same original documents, but their acquired evidence and notes may differ. To test whether these histories still affect information seeking, Probe 6/6 records a new search-or-answer proposal after six arrivals without executing it.

The implied retrieval sets still disagree by 5.5--14.7\,pp above repeat variation across six task--model comparisons (Figure~\ref{fig:ladder}; Appendix Figure~\ref{fig:ladder-replication}). Unlike waiting before the first decision (Notes 6/6), completing delivery after early acquisition leaves subsequent search sensitive to the prior schedule.

Table~\ref{tab:persistence} distinguishes whether to search from what to retrieve. Search demand remains sensitive in every task, reaching 14.7\,pp of excess disagreement on compositional questions. Even when both runs choose to search, their queries imply different document sets, with direction contrasts of 4.3 and 6.2\,pp on HotpotQA and MuSiQue, respectively.
These outcomes separate two consequences of an earlier acquisition history. It can alter whether the agent seeks more information at all, and it can redirect the evidence sought even when both trajectories choose to continue.

\begin{table}[!ht]
\centering\small
\setlength{\tabcolsep}{3pt}
\begin{tabular}{lr@{\hspace{0.35em}}lr@{\hspace{0.35em}}l}
\toprule
Task ($n$) & \multicolumn{2}{c}{Search demand} & \multicolumn{2}{c}{Search direction} \\
\midrule
Comp. (200) & $14.7$ & $[12.4,\,17.1]$ & $2.0$ & $[0.8,\,3.4]$ \\
Bridge (200) & $8.2$ & $[6.2,\,10.3]$ & $1.3$ & $[0.2,\,2.4]$ \\
HotpotQA (100) & $8.6$ & $[5.4,\,11.9]$ & $4.3$ & $[0.6,\,8.1]$ \\
MuSiQue (100) & $3.2$ & $[0.3,\,6.2]$ & $6.2$ & $[2.9,\,9.6]$ \\
\bottomrule
\end{tabular}
\caption{Search demand and direction remain schedule-sensitive after delivery (DeepSeek V4). Excess disagreement above same-order repeats (pp [95\% CI]). Demand compares search/answer choices; direction compares retrieval sets when both runs search. Comp./Bridge denote 2Wiki tasks; eligible pairs: Appendix~\ref{app:protocol}.}
\label{tab:persistence}
\end{table}

\paragraph{Persistence through a second search.}
We allow a second search after input completion and any first-search feedback. On 100 compositional questions, GPT-5.6 yields cumulative acquisition contrasts of 10.4\,pp for early commitment and 2.4\,pp for complete-input commitment, a paired difference of \mbox{8.0\,pp [5.5, 10.4]}. Further search can recover support without eliminating this dependence in the accumulated evidence.

\subsection{Acquisition Quality Beyond Final Accuracy}
\label{sec:cost}

\begin{table*}[t]
\centering\small
\setlength{\tabcolsep}{5pt}
\newlength{\costnumwidth}
\settowidth{\costnumwidth}{$-00.0$}
\newcommand{\costnum}[1]{\makebox[\costnumwidth][r]{$#1$}}
\newcommand{\costhead}[2]{\begin{tabular}[c]{@{}c@{}}#1\\#2\end{tabular}}
\begin{tabular}{lccr@{\hspace{0.35em}}lr@{\hspace{0.35em}}l}
\toprule
& \multicolumn{4}{c}{Incremental reasoning} & \multicolumn{2}{c}{Batch reasoning} \\
\cmidrule(lr){2-5}\cmidrule(l){6-7}
Task & \costhead{Runs with}{search} & \costhead{Searches finding}{support} & \multicolumn{2}{c}{\costhead{Answer}{accuracy}} & \multicolumn{2}{c}{\costhead{Answer}{accuracy}} \\
\midrule
\multicolumn{7}{l}{\textbf{DeepSeek V4}} \\
2Wiki: bridge comparison & \costnum{-10.1} & \costnum{+80.9} & $+30.3$ & $[26.6,\,34.2]$ & $-2.5$ & $[-5.7,\,0.6]$ \\
2Wiki: compositional & \costnum{-13.1} & \costnum{+54.8} & $+17.4$ & $[13.1,\,21.7]$ & $-4.4$ & $[-7.1,\,-1.8]$ \\
HotpotQA & \costnum{-25.1} & \costnum{+35.0} & $+3.6$ & $[-1.1,\,8.3]$ & $-8.9$ & $[-13.1,\,-5.1]$ \\
MuSiQue & \costnum{-24.2} & \costnum{+15.0} & $-1.1$ & $[-3.6,\,1.5]$ & $-5.5$ & $[-8.2,\,-3.0]$ \\
\addlinespace
\multicolumn{7}{l}{\textbf{GPT-5.6}} \\
2Wiki: bridge comparison & \costnum{-14.8} & \costnum{+64.9} & $+36.1$ & $[30.8,\,41.4]$ & $+12.8$ & $[7.4,\,18.2]$ \\
2Wiki: compositional & \costnum{-9.6} & \costnum{+53.6} & $+18.2$ & $[12.5,\,24.1]$ & $+16.6$ & $[10.4,\,22.9]$ \\
HotpotQA & \costnum{-21.1} & \costnum{+30.4} & $+2.1$ & $[-1.4,\,5.8]$ & $-0.6$ & $[-3.5,\,2.1]$ \\
MuSiQue & \costnum{-24.8} & \costnum{+10.7} & $+0.3$ & $[-3.6,\,4.4]$ & $+3.1$ & $[0.8,\,5.7]$ \\
\bottomrule
\end{tabular}
\caption{Waiting makes searches more selective. Differences are complete-input minus early commitment (pp). Support recovery is the fraction of searches finding the withheld document; brackets give paired 95\% CIs for accuracy.}
\label{tab:cost}
\end{table*}

We next examine what these acquisition differences mean for support recovery and answer accuracy. Under the single-search budget, waiting reduces search use and increases recovery per search across all eight incremental settings (Table~\ref{tab:cost}). Both agents also recover more support overall and answer more accurately on both 2Wiki tasks. In the primary bridge-comparison evaluation, recovery rises from 13.2\% to 79.6\%, alongside a 30.3\,pp accuracy gain. Search productivity improves across tasks even though accuracy changes are much smaller on HotpotQA and MuSiQue.

In the two-search evaluation, early commitment reaches 43.6\% accuracy versus 40.9\% for complete-input commitment, a paired gain of 2.7\,pp [0.4, 5.1]. It uses 1.54 versus 1.13 searches per trajectory, with support recovery at 84.4\% and 83.5\%. Higher accuracy and frequent support recovery coexist with greater schedule sensitivity in the acquired evidence (Section~\ref{sec:persistence}).

The complete-input controls clarify what a stable policy should preserve. On the primary bridge-comparison task, batch reasoning almost never searches and reaches 66.4\% accuracy, versus 89.8\% for incremental reasoning. Both have low sensitivity, but incremental reasoning recovers more support (Appendix Table~\ref{tab:cost-full}). Stability is compatible with productive retrieval; the reasoning program shapes how effectively the available evidence guides search.

Different evidence paths can produce the same correct answer (Appendix~\ref{app:input-audit}). The design objective is to preserve useful acquisition while making evidence selection insensitive to delivery timing.

\subsection{The Acquisition Cost of Early Invariance}
\label{sec:early-invariance}
The complete-input controls show that useful acquisition can coexist with low schedule sensitivity. At a fixed early arrival count, however, invariance can restrict how a policy uses evidence it already has. The key is that packets requiring different searches may share a possible prefix.

Fix the question and other decision inputs, and consider one policy across a family of admissible packets. After $m$ of $n$ arrivals, with $1\leq m<n$, it uses only the canonically processed prefix set $S$ to choose a search-or-answer action with law $\mu_S$.

Form a graph $G$ of these prefixes, joining two whenever they can arise from the same packet. Invariance equates decision laws along each edge; a shared prefix carries this constraint across packets. Exact decision invariance is therefore equivalent to a common law within each connected component (Appendix~\ref{app:input-audit}). Let $d_G(S,T)$ denote shortest-path distance.

Let $\ell_E(a)\in[0,1]$ be the loss in acquisition utility relative to the best single action for packet $E$, and $r_E(S)=\E_{a\sim\mu_S}\ell_E(a)$ the expected regret. For $k$ packet-prefix pairs $(E_i,S_i)$ whose prefixes lie in one component $C$, define
\begin{equation}
\begin{aligned}
\Gamma &= \inf_a\frac1k\sum_{i=1}^k\ell_{E_i}(a),\\
\rho_G &= \min_{V\in C}\frac1k\sum_{i=1}^k d_G(V,S_i).
\end{aligned}
\label{eq:acquisition-conflict}
\end{equation}
Here $\Gamma$ is the smallest average utility loss achievable with one action across packets; $\rho_G$ measures how closely their prefixes connect to a common reference.

\begin{figure*}[t]
\centering
\definecolor{domainBlue}{HTML}{3C5488}
\definecolor{domainOrange}{HTML}{C65B38}
\definecolor{domainInk}{HTML}{222222}
\definecolor{domainGray}{HTML}{72777B}
\begin{tikzpicture}[x=1cm,y=1cm,
 font=\sffamily\fontsize{8.2}{9.6}\selectfont,text=domainInk,
 heading/.style={anchor=west,font=\sffamily\fontsize{9}{10.5}\selectfont\bfseries},
 label/.style={font=\sffamily\fontsize{8}{9.5}\selectfont},
 detail/.style={text=domainGray,font=\sffamily\fontsize{7.5}{8.8}\selectfont},
 evidence/.style={line width=.5pt,rounded corners=1.5pt,
   minimum width=2.42cm,minimum height=.63cm,inner sep=3pt,align=center},
 request/.style={line width=.5pt,rounded corners=1.5pt,
   minimum width=2.70cm,minimum height=.63cm,inner sep=3pt,align=center},
 ready/.style={line width=.45pt,rounded corners=1.5pt,
   minimum width=1.30cm,minimum height=.57cm,inner sep=2.5pt},
 result/.style={line width=.45pt,rounded corners=1.5pt,
   minimum width=2.86cm,minimum height=.57cm,inner sep=3pt,align=center},
 flow/.style={-{Stealth[length=1.4mm,width=.95mm]},line width=.6pt}]
\node[heading] at (0,.44) {a\quad Form local requests};
\node[label] at (1.20,-.10) {Complete local evidence};
\node[label] at (4.18,-.10) {Search request};
\node[evidence,draw=domainBlue!55,fill=domainBlue!4] (director) at (1.20,-1.00) {Director identified};
\node[request,draw=domainBlue!55,fill=domainBlue!7] (directorquery) at (4.18,-1.00) {Director's birthplace};
\draw[flow,domainBlue!85] (director.east)--(directorquery.west);
\node[evidence,draw=domainOrange!55,fill=domainOrange!4] (actor) at (1.20,-2.20) {Actor identified};
\node[request,draw=domainOrange!55,fill=domainOrange!7] (actorquery) at (4.18,-2.20) {Actor's birthplace};
\draw[flow,domainOrange!85] (actor.east)--(actorquery.west);
\node[detail] at (2.70,-3.16) {Independent reasoning};
\draw[domainGray!25,line width=.4pt] (5.86,.61)--(5.86,-3.34);
\node[heading] at (6.17,.44) {b\quad One search slot, two allocation rules};
\node[label] at (7.95,-.02) {Request completion order};
\node[detail] at (7.06,-.35) {First};
\node[detail] at (8.84,-.35) {Second};
\node[label,font=\sffamily\fontsize{8.2}{9.6}\selectfont\bfseries] at (11.75,-.02) {First-ready};
\node[detail] at (11.75,-.35) {First request wins};
\node[label,font=\sffamily\fontsize{8.2}{9.6}\selectfont\bfseries] at (15.63,-.02) {CEA};
\node[detail] at (15.63,-.35) {Director has priority};
\draw[domainGray!35,line width=.45pt] (6.17,-.56)--(17.28,-.56);
\draw[domainGray!16,line width=.4pt] (6.17,-1.77)--(17.28,-1.77);
\node[ready,draw=domainBlue!55,fill=domainBlue!7] at (7.06,-1.00) {Director};
\node[ready,draw=domainOrange!55,fill=domainOrange!7] at (8.84,-1.00) {Actor};
\node[ready,draw=domainOrange!55,fill=domainOrange!7] at (7.06,-2.20) {Actor};
\node[ready,draw=domainBlue!55,fill=domainBlue!7] at (8.84,-2.20) {Director};
\draw[domainGray!22,line width=.4pt] (9.91,-.70)--(9.91,-2.86);
\node[result,draw=domainBlue!35,fill=domainBlue!6] at (11.75,-1.00) {Director biography};
\node[result,draw=domainOrange!35,fill=domainOrange!6] at (11.75,-2.20) {Actor biography};
\node[result,draw=domainBlue!55,fill=domainBlue!7] at (15.63,-1.00) {Director biography};
\node[detail] at (15.63,-1.51) {No wait for actor};
\node[result,draw=domainBlue!55,fill=domainBlue!7] at (15.63,-2.20) {Director biography};
\node[detail] at (15.63,-2.71) {Wait for director};
\node[text=domainOrange,font=\sffamily\fontsize{8}{9.5}\selectfont] at (11.75,-3.16) {Different retrieved evidence};
\node[text=domainBlue,font=\sffamily\fontsize{8}{9.5}\selectfont] at (15.63,-3.16) {Same retrieved evidence};
\end{tikzpicture}
\caption{Stable requests can still yield different acquisitions. In this deterministic example, evidence identifies a director and an actor but omits their birthplaces. The rows reverse request completion order. With one search slot, first-ready retrieves different biographies; CEA fixes priority before delivery and retrieves the director's biography in both orders.}
\Description{Panel a shows two independent subquestions seeking the birthplaces of a director and an actor. Each forms a fixed search request after receiving its complete local evidence, which identifies the person but omits the birthplace. Panel b compares two alternative request completion orders, shown as rows, under a budget of one search. The first and second columns identify which request finishes first and second; these are not sequential reasoning steps. The remaining columns show the document retrieved by each allocation rule. When the director request finishes first, both rules retrieve the director biography, and CEA searches immediately without waiting for the actor request. When the actor request finishes first, first-ready allocation retrieves the actor biography, whereas CEA waits for the priority director request and retrieves the director biography. The local requests are unchanged, but first-ready retrieves different evidence across the two orders. CEA retrieves the same evidence while preserving independent local reasoning.}
\label{fig:domains}
\end{figure*}

\begin{samepage}
\begin{proposition}[Schedule stability and acquisition regret]
\label{prop:prefix-invariance}
If adjacent prefixes have decision laws at total variation distance at most $\varepsilon$, then
\begin{equation}
\frac1k\sum_{i=1}^k r_{E_i}(S_i)\ge[\Gamma-\rho_G\varepsilon]_+,
\label{eq:stability-regret}
\end{equation}
where $[x]_+=\max\{x,0\}$.
\end{proposition}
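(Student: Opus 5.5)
The plan is to compare every decision law $\mu_{S_i}$ with the law at a single reference prefix $V\in C$ that attains the minimum defining $\rho_G$. Under that one common law, the average regret is at least $\Gamma$. Moving from $\mu_V$ back to each $\mu_{S_i}$ then costs at most $\varepsilon$ per edge of the shortest path from $V$ to $S_i$.

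\textbf{Step 1: a common-law lower bound.} For the fixed reference $V$, linearity of expectation gives
\[
\frac1k\sum_{i=1}^k r_{E_i}(V):=\frac1k\sum_{i=1}^k\E_{a\sim\mu_V}\ell_{E_i}(a)=\E_{a\sim\mu_V}\Bigl[\frac1k\sum_{i=1}^k\ell_{E_i}(a)\Bigr]\ge\Gamma.
\]
The inequality holds because the integrand is pointwise at least its infimum over actions. Here $r_{E_i}(V)$ is only a formal quantity: it averages packet $E_i$'s loss under $\mu_V$, whether or not $V$ is a prefix of $E_i$.

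\textbf{Step 2: transport along paths.} For a fixed loss $\ell\in[0,1]$ and any two laws $\mu,\nu$, we have $|\E_\mu\ell-\E_\nu\ell|\le d_{TV}(\mu,\nu)$; this uses the sup-over-events form of total variation, since $\ell$ takes values in $[0,1]$. Let $V=U_0,U_1,\dots,U_t=S_i$ be a shortest path in $G$, so $t=d_G(V,S_i)$. By the hypothesis on adjacent prefixes and the triangle inequality for total variation, $d_{TV}(\mu_V,\mu_{S_i})\le t\varepsilon$. It follows that $r_{E_i}(S_i)\ge r_{E_i}(V)-d_G(V,S_i)\,\varepsilon$.

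\textbf{Step 3: average and clip.} Averaging Step 2 over $i$ and applying Step 1 gives
\[
\frac1k\sum_{i=1}^k r_{E_i}(S_i)\ge\Gamma-\varepsilon\cdot\frac1k\sum_{i=1}^k d_G(V,S_i)=\Gamma-\rho_G\varepsilon.
\]
The left side is nonnegative because each $\ell_E\ge0$, so the bound can be replaced by its positive part $[\Gamma-\rho_G\varepsilon]_+$.

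\textbf{Main obstacle.} The only delicate point is making sure the minimum in $\rho_G$ is attained and every $d_G(V,S_i)$ is finite. Both follow because all $S_i$ lie in the single connected component $C$, and because the distances are integers, so the minimum over $V\in C$ is attained; if $C$ is infinite the set of candidate values is still well-ordered. The choice of which $V$ to use is deliberate: Step 1 holds for any $V$, whereas Step 3 needs the $V$ attaining the minimum.

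As a sanity check, the case $\varepsilon=0$ recovers the common-law-per-component statement quoted before the proposition: the bound becomes $\Gamma$, which is tight when $\mu$ concentrates on a minimizing action.
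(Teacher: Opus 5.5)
Your proof is correct and follows the paper's argument essentially step for step: bound $\operatorname{TV}(\mu_V,\mu_{S_i})$ by $\varepsilon\, d_G(V,S_i)$ along shortest paths, apply the bounded-loss comparison for $[0,1]$-valued losses, use that any mixture under the common law $\mu_V$ has average loss at least $\Gamma$, then minimize over $V\in C$ and clip using nonnegativity of regret. Your remark that the minimum defining $\rho_G$ is attained is a harmless extra; the paper takes it for granted by writing $\min$.
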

\end{samepage}

\begin{proof}
For $V\in C$, shortest paths give $\operatorname{TV}(\mu_{S_i},\mu_V)\le\varepsilon d_G(V,S_i)$. Since $0\le\ell_{E_i}\le1$, bounded-loss comparison~\cite{chen2016bayes} yields
\[
r_{E_i}(S_i)\ge\E_{\mu_V}\ell_{E_i}-\varepsilon d_G(V,S_i).
\]
Every action has average loss at least $\Gamma$, as does a mixture under $\mu_V$. Minimizing average distance over $V$ and using nonnegative regret proves the bound.
\end{proof}

When acquisition needs conflict ($\Gamma>0$), exact invariance forces positive average regret. Allowing $\varepsilon$ variation along each edge relaxes the bound by at most $\rho_G\varepsilon$. Closely connected prefixes leave little room to adapt decisions to packet-specific evidence while preserving stability.

For support recovery, suppose the $k$ packets have distinct, individually recoverable targets, while one search can recover at most $b<k$ of them. Then $\Gamma\ge1-b/k$, so exact invariance leaves an average recovery failure probability of at least $1-b/k$ over the selected pairs.

The matching construction attains this bound on prefixes that identify their packet and its target (Appendix~\ref{app:input-audit}). Each such informative prefix is adjacent to one common, ambiguous prefix, which ties their decision laws together. With $\rho_G=1$, every selected pair attains regret $[1-b/k-\varepsilon]_+$. Because the selected prefixes already identify the targets, this loss comes from the invariance requirement itself.

CEA replaces fixed-count commitment with completion of each subquestion's assigned evidence. The local decision sees the same evidence under every delivery order, while unrelated delivery continues. Shared search budgets introduce a further dependency at execution. CEA also resolves allocation independently of completion order, so scheduling cannot redirect acquisition through competition between stable proposals (Section~\ref{sec:method}).

\section{Confluent Evidence Acquisition}
\label{sec:method}
\label{sec:framework}

We introduce \emph{Confluent Evidence Acquisition} (CEA), which coordinates reasoning and search through their evidence and resource dependencies while allowing independent work to proceed.

\subsection{Evidence and Resource Dependencies}
\label{sec:joint-acquisition}
CEA organizes reasoning around the inputs assigned to each subquestion. A \emph{branch} handles one subquestion, such as establishing a person's birthplace in a comparison. The subquestion and its assigned initial documents form a \emph{dependency domain}, fixed before delivery.

A branch begins reasoning once all documents in its domain have arrived, while other branches may still await their inputs. Changing delivery order changes when this branch is ready, but not which initial evidence it uses. CEA replaces commitment after a fixed arrival count with completion of the decision's inputs (Section~\ref{sec:early-invariance}). Those inputs may still leave facts unresolved, so the branch can request further retrieval.

In an explicit task graph~\cite{xu2023rewoo,kim2024llmcompiler}, retrieval calls identify which evidence belongs to each subquestion. Our controlled experiments use annotated chains, with two domains for comparisons and one for dependent chains. Subquestions provide starting entities and relations without revealing annotated intermediates or answers. With question-only plans, each document is assigned to the subquestion whose retrieval call returned it (Section~\ref{sec:automatic-plans}).

Evidence independence does not imply independent access to search. Even when branches form stable queries, they may compete for the same budget. In Figure~\ref{fig:domains}, giving one search slot to the first completed proposal changes which query executes across schedules. CEA allows independent proposal formation and resolves this competition through an allocation rule that does not depend on completion order.

\subsection{Coordinating Reasoning and Search}
\paragraph{Local reasoning.}
\begin{samepage}
Let $D_1,\ldots,D_k$ be the domains' initial evidence sets, indexed topologically with branch-identifier tie breaking. Readiness occurs at the last assigned document's arrival position
\begin{equation}
r_i(\pi)=\max_{e\in D_i}\operatorname{pos}_{\pi}(e).
\label{eq:readiness}
\end{equation}
\end{samepage}
Incremental CEA processes the ready domain's evidence in canonical order, accumulating private notes before proposing a query or answer. Each request contains only its subquestion, evidence, and notes. Canonical processing fixes the reasoning sequence; private histories keep concurrent branches from altering its context. With one domain, readiness requires all $n=|E|$ items. CEA (Batch) presents the domain's complete evidence in one request.

\paragraph{Search allocation.}
A query proposal does not immediately trigger search. With priorities fixed before delivery, CEA assigns the single search opportunity to the least-indexed requester $j^\star$. Its query can execute once its proposal is available and every higher-priority domain has declined search. Lower-priority branches need not finish their proposals. If all domains decline, no retrieval occurs.

More generally, allocation can be any fixed function of domain-indexed proposals and schedule-independent inputs. Search is eligible once pending proposals cannot change the allocation. Priorities may follow the plan or a question-seeded hash of domain identifiers.

\paragraph{Feedback and synthesis.}
Search results return only to the selected branch, which replaces its proposal with a local answer without altering other branches' query inputs. Final synthesis orders all local outputs by domain, retaining subquestion and root-entity labels. Completion times cannot change their presentation.

\subsection{Joint Invariance and Search Release}
\label{sec:guarantee}

We call joint schedule invariance of local decisions, executed acquisitions, and the final answer \emph{epistemic confluence}. Let $X$ be a complete request, $Y$ its response, and $H$ the preceding execution history. Assumption (A-K) states that $\mathcal L(Y\mid X,H)=K(\cdot\mid X)$ at fixed model settings, so the response distribution depends only on the request. With deterministic retrieval, these conditions yield the following guarantee.

\begin{samepage}
\begin{proposition}[Epistemic confluence of CEA]
\label{prop:domain-invariance}
Under these conditions, CEA preserves the joint law of domain-indexed decisions, executed acquisitions, and the final answer across permutations of the original packet. Readiness and tool-release positions may change.
\end{proposition}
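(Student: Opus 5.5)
The plan is a coupling argument. We construct a schedule-free random object and show that, under every permutation $\pi$ of the original packet, CEA's domain-indexed decisions, executed acquisitions, and final answer are the same deterministic function of that object. Concretely, for each domain $i$ we fix a canonical sequence of requests. The first request is determined by the subquestion and $D_i$, processed in canonical order. Each later request is determined by the subquestion, $D_i$, the notes produced so far, and, for the selected branch only, the deterministic retrieval results. We then argue by induction along this request sequence that the request contents $X$ do not depend on $\pi$. Only the wall-clock or arrival position at which each request is issued changes. By (A-K), $\mathcal L(Y\mid X,H)=K(\cdot\mid X)$, so each response has a law that depends on $X$ alone. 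The full trajectory's law is therefore the product of the kernels $K$ composed along the canonical sequence, and the history $H$, which is where $\pi$ enters, drops out.

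The key steps run in order. \textbf{(i) Local inputs.} Readiness $r_i(\pi)$ in \eqref{eq:readiness} depends on $\pi$. However, the branch waits until every $e\in D_i$ has arrived and then processes $D_i$ in canonical identifier order. Hence the first request of branch $i$ is a fixed function of $(q_i,D_i)$. Private histories ensure that no other branch's output enters this request. An induction over the branch's incremental note-taking steps then shows that its pre-search proposal is a fixed function of $(q_i,D_i)$ and the responses to its own earlier requests. \textbf{(ii) Allocation.} The selected requester $j^\star$ is the least-indexed domain whose proposal is a search, or no domain if all decline. This is a function of the domain-indexed proposal vector alone, and priorities are fixed before delivery. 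The release rule executes search only once pending proposals cannot change this function's value. So the executed query equals the value the rule would take on the complete proposal vector, and it never depends on which proposals finished first. Deterministic BM25 then makes the acquired set a function of that query. \textbf{(iii) Feedback and synthesis.} Results return only to $j^\star$. Its follow-up request is a function of its own prior request, its response, and the retrieval output. Other branches' requests are untouched. Synthesis orders local outputs by domain index, so the synthesis request is a fixed function of the domain-indexed outputs, and (A-K) again gives its law.

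To finish, we realize all responses on a common probability space. For each possible request $x$ we draw an independent sample $Y_x\sim K(\cdot\mid x)$, or equivalently we sample sequentially along the canonical dependency order. Under every $\pi$, CEA's outputs are then $F(\{Y_x\})$ for one map $F$, which yields equality of the joint law across permutations. The closing remark of the statement follows because $r_i(\pi)$ and the release time of the search, which is the maximum of readiness over $j^\star$ and higher-priority domains, are the only quantities that depend on $\pi$.

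The step I expect to be the main obstacle is (ii), together with its interaction with (A-K). The difficulty is to make precise that releasing search early cannot feed back into a still-pending lower-priority proposal. The early release must not alter any request that is issued later in real time. It must also not make a response depend on the real-time history $H$, for instance through a shared context or a retrieval result leaking into another branch. I would handle this by stating (A-K) at the level of each request individually, and by verifying from the protocol that the request contents of non-selected branches never include search feedback. With that in place, the timing of responses matters only through $H$, which (A-K) rules out.
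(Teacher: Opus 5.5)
Your proposal is correct and follows essentially the paper's route: couple corresponding logical calls across schedules, induct on each branch's private history under canonical processing, and then use order-independent allocation, deterministic retrieval, private feedback, and domain-ordered synthesis. Two small refinements are worth making. First, index the independent draws by logical call (domain and step), as the paper does with seeds independent across calls, rather than by request content $x$, because under (A-K) two calls issuing the same request must receive independent responses. Second, in your closing remark the release time is $\max_{i\le j^\star}\tau_i$ over proposal-availability times, not over readiness positions $r_i(\pi)$.
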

\end{samepage}

To see why, couple corresponding logical calls with seeds shared across schedules and independent across calls. Each branch processes identical domain inputs in the same order, so induction gives matching private histories and proposals. Stable allocation selects the same requester. Deterministic retrieval, private feedback, and canonical synthesis then preserve agreement through the final answer. Conditioning on a plan generated independently of delivery and averaging over plans preserves the guarantee.

The same dependencies determine how early search can start. Let $\tau_i$ be the time at which domain $i$'s proposal becomes available. For the realized winner $j^\star$ under fixed priority, search requires its proposal and the higher-priority declines, giving the earliest eligible release
\begin{equation}
\tau_{\mathrm{eligible}}=\max_{i\le j^\star}\tau_i.
\label{eq:allocation-release}
\end{equation}
Local reasoning can overlap delivery to other domains; once allocation is settled, search can overlap their unfinished reasoning. Final synthesis waits for all domain outputs and any selected retrieval feedback. Appendix~\ref{app:input-audit} proves the release criterion and reports input checks; Section~\ref{sec:method-evaluation} tests acquisition quality and asynchronous progress.

\begin{figure*}[t]
\centering
\begin{subfigure}[t]{.325\textwidth}
\centering
\includegraphics[width=\linewidth]{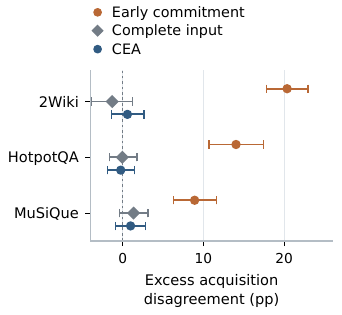}
\caption{Acquisition stability}
\end{subfigure}\hfill
\begin{subfigure}[t]{.345\textwidth}
\centering
\includegraphics[width=\linewidth]{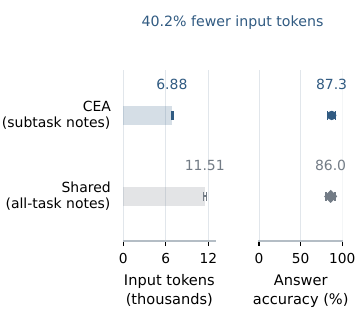}
\caption{Reasoning cost and accuracy}
\end{subfigure}\hfill
\begin{subfigure}[t]{.300\textwidth}
\centering
\includegraphics[width=\linewidth]{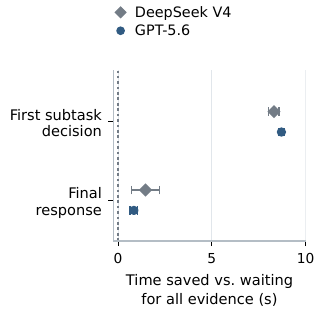}
\caption{Time saved by local execution}
\end{subfigure}
\caption{CEA evaluation. Panels (a,b) use GPT-5.6. (a) Acquisition disagreement with question-only plans under early commitment (2/6 arrivals), complete-input shared history (6/6), and CEA. Zero marks same-order repeat variation. (b) Mean input-token cost per run and accuracy with matched reasoning calls. CEA keeps subtask notes separate; Shared combines them. (c) Time saved against the same program waiting for all initial evidence (20 DeepSeek V4 and 100 GPT-5.6 questions). Whiskers show 95\% CIs, paired in (c).}
\Description{Three panels evaluate CEA. Panel a compares acquisition disagreement under early commitment after two of six arrivals, complete-input shared history, and CEA across three benchmarks. Zero marks same-order repeat variation. Panel b separates input-token cost and answer accuracy into aligned columns with common method rows. CEA maintains separate notes for each subtask; the control shares notes across all subtasks. CEA uses 6,877 mean input tokens versus 11,505 for shared notes, with accuracies of 87.3 and 86.0 percent. Both token and accuracy estimates have 95 percent confidence intervals. Panel c shows the paired time saved by local execution against the identical CEA program delayed until all evidence arrives. Positive values mean earlier completion. The first subtask decision finishes about eight seconds earlier, while the final response saves between about one and two seconds. Gray diamonds denote DeepSeek V4 and blue circles GPT-5.6.}
\label{fig:method-results}
\end{figure*}

\section{Stable Acquisition with Asynchronous Progress}
\label{sec:method-evaluation}

CEA aims to retain the stability and answer quality of complete-input reasoning while allowing independent work to proceed. We test stability and search allocation, then evaluate question-only plans. Evidence recovery, matched reasoning controls, and timed execution assess useful search and efficiency (Figure~\ref{fig:method-results}).

\subsection{Stability from Retrieval to Answers}
We first compare CEA with early commitment on controlled bridge comparisons. Excess acquisition disagreement falls from 21.5 to $-0.2$\,pp with DeepSeek V4 and from 24.4 to $-0.7$\,pp in the matched GPT evaluation. Changing delivery order now produces about as much variation as repeating the same order. The corresponding answer contrasts are $-0.6$ and $-0.3$\,pp. On dependent chains, primary answer disagreement also falls from 12.1 to 1.0\,pp (Appendix Table~\ref{tab:answer-stability}).

Stable queries and stable acquisitions are distinct requirements. A branch may form the same request under every schedule, yet lose the opportunity to execute it when another branch finishes first.
To test why stable query formation needs stable allocation, we keep local reasoning fixed and give the shared search slot to the first completed request (Table~\ref{tab:allocation-controls}). With one supporting document withheld, only about 1\% of trajectories have two requesters, so fixed priority and first-ready rarely choose between competing queries.

\begin{table}[!ht]
\centering\small
\setlength{\tabcolsep}{3pt}
\newlength{\allocationnumwidth}
\settowidth{\allocationnumwidth}{$-00.0$}
\newcommand{\allocationnum}[1]{\makebox[\allocationnumwidth][r]{$#1$}}
\begin{tabular}{lccr@{\hspace{0.35em}}l}
\toprule
Allocation & Acq. (pp) & Acc. (\%) & \multicolumn{2}{c}{$\Delta$ Acq. (pp)} \\
\midrule
\multicolumn{5}{l}{\textbf{(a) One missing support: low contention}} \\
Fixed priority & \allocationnum{0.6} & \allocationnum{86.7} & \multicolumn{2}{c}{\textemdash} \\
First-ready & \allocationnum{0.7} & \allocationnum{87.3} & $+0.1$ & $[-2.1,\,2.4]$ \\
\addlinespace
\multicolumn{5}{l}{\textbf{(b) Two missing supports: high contention}} \\
Fixed priority & \allocationnum{-0.5} & \allocationnum{74.5} & \multicolumn{2}{c}{\textemdash} \\
First-ready & \allocationnum{21.1} & \allocationnum{74.1} & $+21.6$ & $[18.3,\,24.8]$ \\
\bottomrule
\end{tabular}
\caption{CEA allocation ablation on 100 2Wiki bridge-comparison questions (GPT-5.6; one search slot). Acq. denotes acquisition contrast and Acc. denotes accuracy. $\Delta$ is first-ready minus fixed [paired 95\% CI].}
\label{tab:allocation-controls}
\end{table}

With terminal support withheld from each branch on the same 100 questions, contention reaches 80.9--82.5\%. Local query-target disagreement remains near repeat variation under both rules, yet first-ready adds 21.6\,pp [18.3, 24.8] of acquisition disagreement, with similar accuracy.
The winning request determines which new evidence enters the selected branch and, through its answer, the final synthesis.

Each local proposal is formed from complete evidence, so the difference arises when competing requests are converted into an executed search. The shared budget couples otherwise independent branches at this step. A completion-order rule turns their relative completion times into a choice of evidence. CEA assigns this choice to a schedule-independent rule, allowing proposal formation to remain concurrent while keeping timing from selecting the evidence used by later reasoning (Figure~\ref{fig:domains}).

\subsection{Acquisition from Model-Generated Plans}
\label{sec:automatic-plans}
The controlled analysis uses annotated chains to define dependencies. Here, a planner sees only the question and specifies subquestions and initial retrieval queries. Each retrieved document belongs to the subquestion whose call returned it, giving CEA its dependency domains (Section~\ref{sec:setup}; Appendix~\ref{app:dependency}). Generated plans contain either a single reasoning chain or two independent subquestions. In the latter case, one branch can reason while the other awaits its inputs. We fix each generated plan and retrieved packet across policies to compare execution of the same task.

\begin{table}[!ht]
\centering\small
\setlength{\tabcolsep}{3pt}
\begin{tabular}{lcccr@{\hspace{0.35em}}l}
\toprule
Task & Early & Shared & CEA & \multicolumn{2}{c}{$\Delta$ vs. Early (pp)} \\
\midrule
2Wiki & $38.5$ & $47.9$ & $51.1$ & $+12.6$ & $[6.4,\,18.9]$ \\
HotpotQA & $38.9$ & $42.0$ & $42.5$ & $+3.6$ & $[0.3,\,7.1]$ \\
MuSiQue & $26.8$ & $25.8$ & $26.8$ & $-0.1$ & $[-3.6,\,3.5]$ \\
\bottomrule
\end{tabular}
\caption{Accuracy (\%) with question-only plans ($n=100$ per task). Early commits after two arrivals; Shared uses complete-input shared history. $\Delta$ is CEA minus Early [paired 95\% CI].}
\label{tab:automatic-results}
\end{table}

CEA retains complete-input stability and answer quality with these generated dependencies. Excess acquisition disagreement is 0.6, $-0.2$, and 1.0\,pp on 2Wiki, HotpotQA, and MuSiQue, comparable to the shared-history control that waits for all inputs (Figure~\ref{fig:method-results}a; Appendix~\ref{app:readings}). Accuracy is also comparable to that control, while exceeding early commitment by 12.6\,pp on 2Wiki and 3.6\,pp on HotpotQA, with a $-0.1$\,pp change on MuSiQue (Table~\ref{tab:automatic-results}; Appendix Table~\ref{tab:expansion-details}). CEA obtains its coordination structure from the agent's own retrieval calls, extending stable acquisition beyond annotated chains.

\subsection{Useful Evidence and Efficient Reasoning}
On the controlled two-domain task, withheld-support recovery measures whether stable acquisition remains useful (Section~\ref{sec:cost}). CEA raises recovery from 79.6\% to 86.4\% with DeepSeek~V4 and from 63.0\% to 81.4\% in the matched GPT evaluation, relative to complete-input \texttt{native@6}. The gain comes mainly from more frequent productive searches. CEA's corresponding accuracies are 90.5\% versus 89.8\% and 87.7\% versus 76.4\%, respectively (Appendix Tables~\ref{tab:cost-full} and~\ref{tab:supplement-details}).

We next isolate the contribution of private histories. A matched 100-question GPT control makes six note-taking calls over canonically ordered evidence for the full question and passes the shared history to both subquestion decisions. CEA retains only each domain's evidence and notes. The reasoning-call structure, allocation, feedback, and synthesis are matched. Both use 9.9 calls per trajectory, but CEA uses 6,877 input tokens versus 11,505, a 40.2\% reduction, with accuracies of 87.3\% and 86.0\% (Figure~\ref{fig:method-results}b). Keeping histories local avoids repeatedly carrying other subquestions' evidence and notes into later calls. By combining subquestion outputs only at final synthesis, CEA uses the same dependency boundaries to stabilize reasoning and limit the context carried through local calls.

CEA can also coordinate different local reasoning programs. Incremental note-taking raises primary accuracy from 67.2\% with Batch to 90.5\%; GPT Batch reaches 89.5\% versus 87.3\% with incremental reasoning (Appendix Table~\ref{tab:supplement-details}). GPT acquisition contrasts remain near repeat variation with CEA, CEA (Batch), and shared history. CEA's coordination rules accommodate either program, allowing local reasoning to be chosen for answer quality while preserving stable acquisition.

\subsection{Progress through Local Completion}
We time the same CEA program under two release rules. Local completion starts each domain's reasoning once all its inputs arrive; the global barrier waits for all initial evidence. On the two-domain task, first subquestion decisions finish 8.7\,s earlier with GPT (100 questions) and 8.3\,s earlier with DeepSeek V4 (20 questions). Across 800 matched GPT pairs, final-response time falls from 27.3 to 26.5\,s, a paired 3.1\% reduction [2.4, 3.8]. Accuracy is 88.3\% with local completion and 88.1\% with the barrier. DeepSeek V4 saves 3.9\% [1.9, 5.9] in response time (Figure~\ref{fig:method-results}c).

Local reasoning can finish during delivery, but final synthesis waits for all branch outputs and selected retrieval feedback (Section~\ref{sec:guarantee}). This explains why local decisions advance more than final responses. All three staggered arrival profiles show positive mean savings in both agents, while simultaneous delivery gives essentially none (Appendix Table~\ref{tab:supplement-details}). This pattern ties the speedup to overlap between reasoning and delivery. Appendix Table~\ref{tab:expansion-details} reports when the shared search budget permits retrieval to begin.

\section{Related Work}
\label{sec:related}

\paragraph{Evidence acquisition in search agents.}
Interactive search treats queries as evolving with acquired information~\cite{bates1989berrypicking}. Language-model search agents build on neural retrieval~\cite{karpukhin2020dpr} and generation conditioned on retrieved evidence~\cite{lewis2020rag,izacard2021fid}. Adaptive retrieval~\cite{jiang2023flare,asai2024selfrag,jeong2024adaptive}, browsing agents~\cite{nakano2021webgpt,li2025searcho1}, and learned tool-use policies~\cite{schick2023toolformer,jin2025searchr1,chen2025research} decide when and what to retrieve during reasoning. Decomposition and interleaved reasoning guide subsequent queries~\cite{press2023selfask,yao2023react,trivedi2023ircot}. GAIA, AgentBench, and BrowseComp evaluate general-assistant tasks, interactive environments, and difficult Web information seeking~\cite{mialon2024gaia,liu2024agentbench,wei2025browsecomp}. We complement these evaluations by examining evidence-path stability across delivery schedules.

\paragraph{Asynchronous execution and timing sensitivity.}
ReWOO separates planning, tool execution, and synthesis through evidence dependencies~\cite{xu2023rewoo}. LLMCompiler dispatches dependency-ready tasks concurrently~\cite{kim2024llmcompiler}, and RAG-R1 learns multi-query parallelism~\cite{tan2025ragr1}. AsyncLM incorporates tool returns into ongoing generation~\cite{gim2024asynclm}, while Speculative Interaction Agents reason during outstanding calls and revise pending work~\cite{hooper2026speculative}. Demonstration order~\cite{lu2022fantastically}, evidence position~\cite{liu2024lost}, and irrelevant context~\cite{shi2023distracted} influence reasoning. Multi-turn studies identify retained assumptions~\cite{laban2025multiturn} and divergent information-seeking paths~\cite{huang2026infoseek}. Concurrent work on ChronosAttack studies timing attacks on authentic tool responses~\cite{vashagh2026chronosattack}. Its causal barrier buffers a decision's observations for canonical presentation, sharing the complete-input waiting principle of our global-barrier control. We trace how timing affects subsequent corpus access. CEA coordinates local evidence dependencies and shared search allocation while allowing independent branches to progress.

\paragraph{Confluence and dependency coordination.}
Concurrency theory characterizes when execution order can vary without changing results. Kahn networks establish determinism under a fixed communication discipline~\cite{kahn1974}; conflict-free replicated data types converge through structured updates and merges~\cite{shapiro2011crdt}. CALM links coordination-free consistency to monotonicity~\cite{alvaro2011bloom,hellerstein2020calm}, while invariant confluence characterizes coordination requirements for application invariants~\cite{bailis2014coordination}. Recent work analyzes shared-state anomalies and concurrent external effects in multi-agent systems~\cite{khan2026concurrency}. CEA addresses stochastic search, where retrieval supplies later reasoning inputs and locally stable proposals can still compete for a shared budget. By coordinating local evidence and shared search, CEA preserves the joint distribution of decisions, acquisitions, and answers.

\section{Conclusion}
\label{sec:discussion}
\label{sec:conclusion}

Delivery timing shapes which evidence search agents acquire from identical initial inputs, with differences that persist through input completion and a second search. Timing influences both the evidence available when queries are formed and which proposed query receives a shared search slot. Acquisition stability captures the reproducibility of the evidence behind an answer, complementing task accuracy.

Our acquisition-regret bound explains why invariance at a fixed early arrival count can restrict the use of informative evidence already received. CEA instead coordinates local evidence completion and shared search allocation. For a fixed plan and stable conditional response law, it guarantees joint invariance of decisions, acquisitions, and answers while allowing independent work to proceed. With question-only plans, CEA retains acquisition stability and answer quality comparable to complete-input controls. Matched evaluations show lower context cost with private histories and earlier decisions and responses with local completion. Coordinating each decision's evidence and resource dependencies supports useful, reproducible acquisition without a global barrier while independent branches progress.

\par

\raggedbottom
\bibliographystyle{ACM-Reference-Format}
\bibliography{refs}
\appendix
\setlength{\emergencystretch}{2em}
\renewcommand{\thetable}{\thesection\arabic{table}}
\renewcommand{\theHtable}{app.\thesection.\arabic{table}}

\begingroup
\begin{table*}[!tp]
\centering\small
\renewcommand{\arraystretch}{0.95}
\setlength{\tabcolsep}{1.5pt}
\begin{minipage}[t]{0.49\textwidth}
\vspace{0pt}
\begin{tabular*}{\linewidth}{@{\hspace{1.5pt}\extracolsep{\fill}}lrrrrrr@{\hspace{1.5pt}}}
\toprule
Strategy & Search & Rec./S & Rec./all & Acc. & Strict & No alias \\
\midrule
\multicolumn{7}{l}{\textbf{DeepSeek V4 / Bridge comparison}\quad $n=200$} \\
\texttt{native@2} & $93.91$ & $14.04$ & $13.19$ & $59.53$ & $49.09$ & $59.53$ \\
\texttt{native@6} & $83.84$ & $94.93$ & $79.59$ & $89.84$ & $77.53$ & $89.84$ \\
\texttt{wait\_all@2} & $63.78$ & $5.44$ & $3.47$ & $68.97$ & \textemdash & \textemdash \\
\texttt{wait\_all@6} & $0.03$ & $100.0$\rlap{$^{\dagger}$}\phantom{0} & $0.03$ & $66.44$ & \textemdash & \textemdash \\
CEA (Batch) & $42.16$ & $90.96$ & $38.34$ & $67.22$ & $57.91$ & $67.22$ \\
CEA & $93.19$ & $92.69$ & $86.38$ & $90.53$ & $78.53$ & $90.53$ \\
\addlinespace[2pt]
\multicolumn{7}{l}{\textbf{DeepSeek V4 / Compositional}\quad $n=200$} \\
\texttt{native@2} & $83.72$ & $37.33$ & $31.25$ & $21.47$ & $21.06$ & $15.41$ \\
\texttt{native@6} & $70.66$ & $92.13$ & $65.09$ & $38.84$ & $38.31$ & $27.09$ \\
\texttt{wait\_all@2} & $70.03$ & $25.26$ & $17.69$ & $15.72$ & \textemdash & \textemdash \\
\texttt{wait\_all@6} & $17.59$ & $72.47$ & $12.75$ & $11.31$ & \textemdash & \textemdash \\
CEA & $70.50$ & $92.07$ & $64.91$ & $38.75$ & $38.22$ & $26.69$ \\
\addlinespace[2pt]
\multicolumn{7}{l}{\textbf{GPT-5.6 / Compositional}\quad $n=100$} \\
\texttt{native@2} & $89.31$ & $37.93$ & $33.88$ & $22.38$ & \textemdash & \textemdash \\
\texttt{native@6} & $79.75$ & $91.54$ & $73.00$ & $40.56$ & \textemdash & \textemdash \\
\texttt{wait\_all@2} & $90.25$ & $40.72$ & $36.75$ & $28.19$ & \textemdash & \textemdash \\
\texttt{wait\_all@6} & $84.69$ & $90.33$ & $76.50$ & $44.81$ & \textemdash & \textemdash \\
CEA & $78.81$ & $90.64$ & $71.44$ & $40.56$ & \textemdash & \textemdash \\
\addlinespace[2pt]
\end{tabular*}
\smallskip
\begin{tabular*}{\linewidth}{@{\hspace{1.5pt}\extracolsep{\fill}}lr@{\extracolsep{0pt}\hspace{0.35em}}l@{\hspace{1.5pt}}}
\toprule
\multicolumn{3}{l}{\textbf{Matched GPT bridge: answer contrasts} ($n=100$)} \\
\texttt{native@2} & $+6.6$ & $[3.3,\,9.9]$\,pp \\
\texttt{native@6} & $+0.3$ & $[-2.0,\,2.8]$\,pp \\
CEA & $-0.3$ & $[-1.2,\,0.8]$\,pp \\
\bottomrule
\end{tabular*}
\end{minipage}\hfill
\begin{minipage}[t]{0.49\textwidth}
\vspace{0pt}
\begin{tabular*}{\linewidth}{@{\hspace{1.5pt}\extracolsep{\fill}}lrrrrrr@{\hspace{1.5pt}}}
\toprule
Strategy & Search & Rec./S & Rec./all & Acc. & Strict & No alias \\
\midrule
\multicolumn{7}{l}{\textbf{DeepSeek V4 / HotpotQA}\quad $n=100$} \\
\texttt{native@2} & $74.00$ & $58.70$ & $43.44$ & $34.50$ & $31.19$ & $34.50$ \\
\texttt{native@6} & $48.88$ & $93.73$ & $45.81$ & $38.12$ & $35.62$ & $38.12$ \\
\texttt{wait\_all@2} & $57.63$ & $47.61$ & $27.44$ & $29.19$ & \textemdash & \textemdash \\
\texttt{wait\_all@6} & $8.06$ & $82.95$ & $6.69$ & $20.25$ & \textemdash & \textemdash \\
\addlinespace[2pt]
\multicolumn{7}{l}{\textbf{DeepSeek V4 / MuSiQue}\quad $n=100$} \\
\texttt{native@2} & $73.62$ & $19.02$ & $14.00$ & $27.12$ & $27.12$ & $26.44$ \\
\texttt{native@6} & $49.44$ & $34.01$ & $16.81$ & $26.06$ & $26.06$ & $25.25$ \\
\texttt{wait\_all@2} & $71.19$ & $14.22$ & $10.12$ & $18.81$ & \textemdash & \textemdash \\
\texttt{wait\_all@6} & $12.12$ & $20.62$ & $2.50$ & $13.31$ & \textemdash & \textemdash \\
\addlinespace[2pt]
\multicolumn{7}{l}{\textbf{GPT-5.6 / Bridge comparison}\quad $n=100$} \\
\texttt{native@2} & $84.81$ & $23.73$ & $20.12$ & $41.25$ & \textemdash & \textemdash \\
\texttt{native@6} & $70.00$ & $88.66$ & $62.06$ & $77.31$ & \textemdash & \textemdash \\
\texttt{wait\_all@2} & $76.06$ & $19.47$ & $14.81$ & $61.25$ & \textemdash & \textemdash \\
\texttt{wait\_all@6} & $46.62$ & $87.13$ & $40.62$ & $74.06$ & \textemdash & \textemdash \\
\addlinespace[2pt]
\end{tabular*}
\smallskip
\begin{tabular*}{\linewidth}{@{\hspace{1.5pt}\extracolsep{\fill}}lrrr@{\extracolsep{0pt}\hspace{0.35em}}l@{\hspace{1.5pt}}}
\toprule
\multicolumn{5}{l}{\textbf{Answer disagreement with DeepSeek V4} ($n=200$)} \\
Strategy & Same (\%) & Cross (\%) & \multicolumn{2}{c}{Centered (pp)} \\
\midrule
\multicolumn{5}{l}{\textbf{Bridge comparison: two domains}} \\
\texttt{native@2} & $32.62$ & $35.17$ & $+2.54$ & $[0.68,\,4.45]$ \\
\texttt{native@6} & $9.56$ & $9.50$ & $-0.06$ & $[-0.78,\,0.68]$ \\
CEA & $9.69$ & $9.11$ & $-0.58$ & $[-1.53,\,0.35]$ \\
\multicolumn{5}{l}{\textbf{Compositional: one domain}} \\
\texttt{native@2} & $14.44$ & $26.58$ & $+12.14$ & $[9.82,\,14.54]$ \\
\texttt{native@6} & $25.19$ & $26.48$ & $+1.29$ & $[-0.21,\,2.80]$ \\
CEA & $23.81$ & $24.77$ & $+0.96$ & $[-0.30,\,2.29]$ \\
\bottomrule
\end{tabular*}
\end{minipage}
\renewcommand{\thetable}{E1}
\caption{Strategy outcomes (\%) and answer disagreement. Rec./S and Rec./all report recovery per search and over all runs; $^{\dagger}$one search. Strict removes normalization; No alias removes aliases. Centered is cross-order minus same-order disagreement [95\% CI], treating NULL as a category. Dashes mark unreported values. Matched GPT contrasts use the replication in Table~\ref{tab:supplement-details}.}
\label{tab:cost-full}
\label{tab:scores}
\label{tab:answer-stability}
\end{table*}

\renewcommand{\thetable}{E2}
\begin{table*}[!tp]
\centering\small
\setlength{\tabcolsep}{3pt}
\newlength{\supplementnumwidth}
\settowidth{\supplementnumwidth}{$-00.0$}
\newcommand{\supplementnum}[1]{\makebox[\supplementnumwidth][r]{$#1$}}
\begin{tabular*}{\textwidth}{@{\hspace{3pt}\extracolsep{\fill}}lccr@{\extracolsep{0pt}\hspace{0.35em}}l@{\extracolsep{\fill}\hspace{6pt}}ccr@{\extracolsep{0pt}\hspace{0.35em}}l@{\hspace{3pt}}}
\toprule
& \multicolumn{4}{c}{DeepSeek V4 ($n=20$)} & \multicolumn{4}{c}{GPT-5.6 ($n=100$)} \\
\cmidrule(lr){2-5}\cmidrule(lr){6-9}
Arrival profile & Local (s) & Barrier (s) & \multicolumn{2}{c}{Saved (s)} & Local (s) & Barrier (s) & \multicolumn{2}{c}{Saved (s)} \\
\midrule
Simultaneous & \supplementnum{22.7} & \supplementnum{22.7} & $-0.1$ & $[-1.4,\,1.4]$ & \supplementnum{12.4} & \supplementnum{12.4} & $\phantom{+}0.0$ & $[-0.2,\,0.3]$ \\
Priority domain first & \supplementnum{40.6} & \supplementnum{42.0} & $+1.5$ & $[-0.5,\,3.4]$ & \supplementnum{30.9} & \supplementnum{32.3} & $+1.4$ & $[0.9,\,1.9]$ \\
Priority domain last & \supplementnum{40.3} & \supplementnum{42.5} & $+2.2$ & $[0.7,\,3.6]$ & \supplementnum{31.5} & \supplementnum{32.3} & $+0.8$ & $[0.5,\,1.2]$ \\
Fragmented & \supplementnum{40.2} & \supplementnum{42.5} & $+2.3$ & $[1.4,\,3.2]$ & \supplementnum{31.1} & \supplementnum{32.3} & $+1.2$ & $[0.8,\,1.6]$ \\
\addlinespace[2pt]
All profiles & \supplementnum{36.0} & \supplementnum{37.4} & $+1.5$ & $[0.7,\,2.2]$ & \supplementnum{26.5} & \supplementnum{27.3} & $+0.8$ & $[0.6,\,1.0]$ \\
\end{tabular*}
\smallskip
\begin{minipage}[t]{0.47\textwidth}
\setlength{\tabcolsep}{2pt}
\begin{tabular*}{\linewidth}{@{\hspace{2pt}\extracolsep{\fill}}lccc@{\hspace{2pt}}}
\toprule
\multicolumn{4}{l}{\textbf{Matched GPT replication ($n=100$)}} \\
Policy & Acq. (pp) & Acc. (\%) & Recovery (\%) \\
\midrule
\texttt{native@2} & \supplementnum{24.4} & \supplementnum{41.0} & $20.1$ \\
\texttt{native@6} & \supplementnum{1.4} & \supplementnum{76.4} & $63.0$ \\
CEA & \supplementnum{-0.7} & \supplementnum{87.7} & $81.4$ \\
\bottomrule
\end{tabular*}
CEA $-$ \texttt{native@6} accuracy: 11.3 [6.7, 16.0]\,pp.
\end{minipage}\hfill
\begin{minipage}[t]{0.51\textwidth}
\setlength{\tabcolsep}{2pt}
\begin{tabular*}{\linewidth}{@{\hspace{2pt}\extracolsep{\fill}}lcccr@{\hspace{2pt}}}
\toprule
\multicolumn{5}{l}{\textbf{GPT reasoning controls ($n=100$)}} \\
Policy & Acq. (pp) & Acc. (\%) & Calls & Input tokens \\
\midrule
CEA & \supplementnum{0.5} & \supplementnum{87.3} & $9.9$ & $6877$ \\
CEA (Batch) & \supplementnum{0.2} & \supplementnum{89.5} & $3.7$ & $2535$ \\
Shared history & \supplementnum{-0.2} & \supplementnum{86.0} & $9.9$ & $11505$ \\
\bottomrule
\end{tabular*}
CEA $-$ Shared accuracy: 1.3 [$-1.8$, 4.6]\,pp.\\
CEA $-$ Batch accuracy: $-2.3$ [$-4.7$, 0.2]\,pp.
\end{minipage}
\caption{Same-program timing and matched GPT-5.6 controls. Saved time is barrier minus local [paired 95\% CI]. Recovery is over all runs; calls and input tokens are means. Lower panels are separate matched comparisons.}
\label{tab:supplement-details}
\end{table*}

\begin{table*}[!t]
\begin{minipage}[t]{0.48\textwidth}
\vspace{0pt}
\renewcommand{\thefigure}{E1}
\renewcommand{\theHfigure}{app.E.1}
\centering
\includegraphics[width=\linewidth]{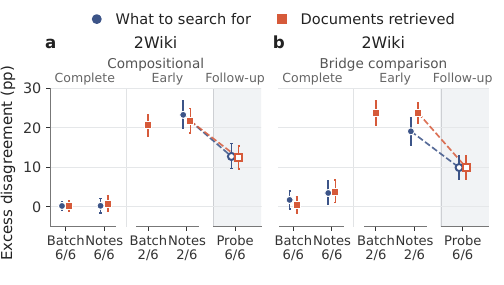}
\captionof{figure}{GPT-5.6 replication of Figure~\ref{fig:ladder}. Conditions, symbols, and scales are identical. Early decisions and their completed-input probes retain excess disagreement.}
\Description{Two GPT-5.6 panels show 2Wiki compositional and bridge-comparison questions. Conditions and metrics match Figure 2. Complete-input results are close to same-order repeat variation; early decisions and their follow-up probes show excess disagreement. Both Batch controls on bridge comparison are measured on the same fixed 100-question cohort.}
\label{fig:ladder-replication}

\end{minipage}\hfill
\begin{minipage}[t]{0.50\textwidth}
\vspace{0pt}
\renewcommand{\thetable}{E3}
\centering\small
\setlength{\tabcolsep}{2pt}
\newlength{\expansionnumwidth}
\settowidth{\expansionnumwidth}{$-00.0$}
\newcommand{\expansionnum}[1]{\makebox[\expansionnumwidth][r]{$#1$}}
\begin{tabular*}{\linewidth}{@{\hspace{2pt}\extracolsep{\fill}}lcccc@{\hspace{2pt}}}
\toprule
\multicolumn{5}{l}{\textbf{Fixed-packet acquisition contrasts (pp)}} \\
Task & Native@2 & Native@6 & Batch@2 & Batch@6 \\
\midrule
HotpotQA & $12.7$ & $3.3$ & $12.5$ & $1.2$ \\
MuSiQue & $11.7$ & $1.3$ & $11.2$ & $0.1$ \\
\end{tabular*}
\smallskip
\begin{tabular*}{\linewidth}{@{\hspace{2pt}\extracolsep{\fill}}lcr@{\extracolsep{0pt}\hspace{0.35em}}l@{\extracolsep{\fill}\hspace{4pt}}r@{\extracolsep{0pt}\hspace{0.35em}}l@{\hspace{2pt}}}
\toprule
\multicolumn{6}{l}{\textbf{Automatic plans: complete-input comparison (pp)}} \\
Task & Shared acq. & \multicolumn{2}{c}{$\Delta$ acq.} & \multicolumn{2}{c}{$\Delta$ acc.} \\
\midrule
2Wiki & \expansionnum{-1.3} & $+1.9$ & $[-1.1,\,5.0]$ & $+3.2$ & $[-1.0,\,7.6]$ \\
HotpotQA & \expansionnum{0.0} & $-0.2$ & $[-2.2,\,1.8]$ & $+0.5$ & $[-0.9,\,1.9]$ \\
MuSiQue & \expansionnum{1.3} & $-0.4$ & $[-3.0,\,2.2]$ & $+1.0$ & $[-1.0,\,3.0]$ \\
\bottomrule
\end{tabular*}
\caption{GPT-5.6 acquisition and accuracy comparisons ($n=100$ per task). Acq. denotes acquisition contrast, Acc. accuracy, and Batch \texttt{wait\_all}. $\Delta$ is CEA minus Shared [paired 95\% CI].}
\label{tab:expansion-details}

\smallskip
{\raggedright\noindent\textit{Paired fixed-packet contrasts (Native@2 $-$ Native@6).} 9.4 [5.9, 12.9]\,pp (HotpotQA) and 10.4 [6.7, 14.3]\,pp (MuSiQue).\par}

\smallskip
{\raggedright\noindent\textit{Search release (timed GPT).} CEA: 732 searches/800 runs, mean position 5.7; 118 (16.1\%) before completion. All 733 barrier searches start at position 6.\par}

\end{minipage}
\end{table*}
\endgroup
\section{Measurement Protocol}
\label{app:protocol}

\paragraph{Tasks and retrieval.}
Questions come from the 2Wiki and MuSiQue development sets and HotpotQA's distractor validation set. Corpus sizes are 398,354 (2Wiki), 509,315 (HotpotQA), and 117,534 (MuSiQue) paragraphs. Eligibility requires top-five retrieval of withheld support using its original title. The agent never receives this query. Non-supporting documents retrieved with the question fill the remaining slots. Duplicate excerpts, truncated support, and answer-exposing chains are excluded; distractors are not answer-filtered. SQLite FTS5 BM25 ($k_1=1.2$, $b=0.75$, equal title/body weights, ties broken by identifier) combines sorted unique Unicode alphanumeric tokens with OR, without stemming or stopword removal. A frozen tokenizer caps excerpts at 200 tokens; notes allow 256 output tokens and decisions/answers 512.

\paragraph{Models and probes.}
DeepSeek-V4-Flash uses temperature~0, top-$p$~1, and non-thinking mode. GPT-5.6 Luna uses low reasoning effort without temperature/top-$p$ settings. Unused early search opportunities expire, but delivery continues after answering. The completed-input probe uses the decision prompt without a budget notification. Target-eligible $n$ is 195, 200, 75 in Figure~\ref{fig:ladder}(a)--(c), and 98, 100 in Figure~\ref{fig:ladder-replication}(a)--(b).

\paragraph{Orders and estimation.}
We use eight orders (canonical, adjacent swaps, half-packet rotation, reversal), two repeats per order, and fixed identifiers and text. Equation~\ref{eq:centering} weights pairs equally within questions, then questions equally. Paired strategy intervals use 10,000 question-bootstrap resamples (seed root 20260914), retaining orders and repeats. Recovery per search uses cluster-ratio intervals. Missing acquisitions remain NULL; missing answers score zero.

\paragraph{Search-budget extensions.}
Each extension has 100 questions, eight orders, two repeats, and two policies (3,200 trajectories). The two-search extension retains both opportunities (Section~\ref{sec:setup}). The allocation experiment uses the fragmented profile below and varies only the allocation rule. First-ready selects the earliest completed search proposal. The contention extension retains the same bridge-comparison cohort and both roots, removes both terminal supports, and adds the question query's four highest-ranked non-supports.

\paragraph{Timed execution.}
Twenty DeepSeek and 100 GPT questions have two repeats per policy/profile. Profiles are simultaneous; priority-domain-first/last (canonical delivery over 0--4/10--20 seconds, singletons at endpoints); and fragmented (0, 4, 8, 12, 16, 20 seconds). Elapsed time spans dispatch to response; resampling retains profiles/repeats. Pseudorandom dispatch interleaves eight trajectories with at most sixteen concurrent requests.

\paragraph{Persistence.}
Table~\ref{tab:persistence} subtracts same-order from cross-order distance over eligible pairs, jointly resampling numerators and denominators. Demand requires valid actions; direction requires two search requests. NULL actions remain unknown and are separately checked as a category. Eligible pairs, not questions, receive equal weight.

\paragraph{Dependency-domain construction.}
\label{app:dependency}
Distractors are assigned by unique matches to branch entities in titles, then contents. Unassigned items fill the smaller domain in evidence-identifier order, with ties broken by branch identifier. Partitions are fixed across schedules.

\paragraph{Question-only plans.}
Planning precedes retrieval of six unique documents, three per domain in two-domain plans. Retrieval deduplicates the top-64 candidates, using the original question for fallback. Retrieval calls determine document ownership. Plans and packets remain fixed across policies and schedules. Two-domain counts are 38, 5, and 8 for 2Wiki, HotpotQA, and MuSiQue, respectively; other plans form one chain. Schema failures use single-chain fallbacks (29, 11, and 45, respectively). All questions and predefined strata are retained. Each question adds one planning call, averaging 171 input and 147 output tokens.

\paragraph{Trajectory coverage.}
\label{app:ledger}
The original 60,200 trajectories (59,200 formal, 1,000 reference) include 108 violations. Replication adds 4,800 complete runs; timing adds 480 (seven violations, 160 matched pairs). Budget extensions add 6,400 with full acquisition coverage; one missing first-ready answer scores zero. The 33,600 GPT expansion trajectories (12,800 cross-task, 4,800 reasoning, 1,600 timing, 14,400 automatic) include 33,291 complete runs, 307 violations, and two failures. Contention adds 3,200 complete runs; GPT bridge batch controls add 3,200 with three violations. Failures are retained; 156 structural pilots, three pilot plans, and 16 contention pilots are excluded.

\section{Proofs and Input Checks}
\label{app:input-audit}

\paragraph{Prefix characterization.}
Exact decision invariance equates laws along every edge and hence every path. Conversely, a packet's prefixes share a component. Thus invariance is equivalent to $\mu_S=\nu_{c(S)}$ for component $c(S)$. If all $n$-subsets are admissible, single-item replacements connect the graph ($m+1\le n$), forcing a constant law.

\paragraph{Acquisition utility and capacity.}
For query or answer actions $a$ and utility $u_E(a)\in[0,1]$, set $\ell_E(a)=\sup_{a'} u_E(a')-u_E(a)$. With support recovery as utility, each individually recoverable target has maximal utility one. Search capacity $b<k$ means $\sum_i u_{E_i}(a)\le b$ for every action. Hence $\Gamma\ge1-b/k$, giving average and maximum regret at least $[1-b/k-\rho_G\varepsilon]_+$ over the selected pairs.

\paragraph{Matching construction.}
Let the admissible packets be $E_i=B\cup U_i$ ($k\ge2$), with $|B|=m$, $|U_i|=n-m$, and $B,U_1,\ldots,U_k$ pairwise disjoint. Their prefix cliques share only $B$. Each selected $S_i\ne B$ identifies packet $i$ through an item from $U_i$. Distances are one to $B$ and two to prefixes exclusive to other packets, giving $\rho_G=1$.

Assign distinct targets to packets and one query to each $b$-subset ($1\le b<k$), recovering exactly that subset; other actions recover none. Then $\Gamma=1-b/k$. Let $\nu$ be uniform over queries, with $\nu_i^+,\nu_i^-$ conditioned on including/excluding target $i$. Set $p=b/k$, $\delta=\min\{\varepsilon,1-p\}$, and
\[
\mu_B=\nu,\qquad
\mu_V=(p+\delta)\nu_i^++(1-p-\delta)\nu_i^-
\]
for every prefix $V\ne B$ exclusive to packet $i$. Disjoint conditional supports give $\operatorname{TV}(\mu_V,\nu)=\delta\le\varepsilon$; other edges join identical laws. Each selected prefix recovers its target with probability $p+\delta$, so
\[
r_{E_i}(S_i)=1-p-\delta=[1-b/k-\varepsilon]_+.
\]
This attains Equation~\ref{eq:stability-regret} on informative prefixes for every $\varepsilon\ge0$, including $n=6,m=2$.

\paragraph{Contraction under scoring.}
Fix question, packet, settings, and nonce. Let $P_\pi$ be the joint acquisition/decision/answer law, $g$ select the answer, and $h$ score binary correctness. Accuracy is $a_\pi=\E_{P_\pi}[h\circ g]$, giving
\begin{equation}
\begin{split}
|a_\pi-a_\sigma|
&=\operatorname{TV}((h\circ g)_\#P_\pi,(h\circ g)_\#P_\sigma)\\
&\le\operatorname{TV}(g_\#P_\pi,g_\#P_\sigma)\\
&\le\operatorname{TV}(P_\pi,P_\sigma).
\end{split}
\label{eq:readout-contraction}
\end{equation}
Equality follows from Bernoulli total variation, and deterministic mappings give the inequalities. Distinct acquisitions can yield the same correct answer, with retrieval TV of one and answer TV of zero. Total variation here compares outcome distributions; Equation~\ref{eq:centering} measures excess pairwise disagreement.

\paragraph{Fixed-priority release.}
Under fixed priority, an available search proposal from domain $j$ is eligible exactly when every $i<j$ has declined. Pending predecessors can preempt $j$; successors cannot. Selecting no search requires all domains to decline. The winner becomes eligible at $\max_{i\le j^\star}\tau_i$ (Equation~\ref{eq:allocation-release}).

\paragraph{Input checks.}
Canonical JSON checks confirm identical batch inputs and incremental non-model messages across orders at fixed question, partition, repeat, domain, and call. All 300 planner inputs contain only questions and common instructions.

\section{Repeat Controls and Interpretation}
\label{app:reliability}
Repeats use distinct nonces shared across orders. Mixing same-repeat distance $a$ and different-repeat distance $c$ equally across orders, but using only $c$ within orders, gives contrast $(a-c)/2$ even for identical inputs. Equation~\ref{eq:centering} uses different-repeat pairs in both terms. For independent categorical draws with laws $p_{\pi r}$, let $\delta_r=p_{\pi r}-p_{\sigma r}$. Then
\begin{equation}
\E[\widehat\Delta_{\pi\sigma}]=\tfrac12\delta_0^{\mathsf T}\delta_1.
\end{equation}
Invariance for each nonce implies zero expected contrast, but the converse fails because nonce-specific effects can cancel. We retain negative contrasts.

\section{Answer Scoring}
\label{app:scoring}
Scoring case-folds Unicode, removes ASCII punctuation and whole-word \emph{a/an/the}, and normalizes whitespace. Empty or missing answers score zero. Exact matching uses answer-ID-linked 2Wiki aliases/demonyms, official MuSiQue aliases, and canonical HotpotQA answers, with string matching for numbers and dates. All 452 numerically parseable matches in the primary audit agreed in value.

Oracle/closed-book reference accuracies (\%, $n=100$) are 94/68 (bridge), 49/21 (compositional), 51/23 (HotpotQA), and 51/16 (MuSiQue) for DeepSeek; GPT compositional values are 45/31.

\section{Additional Results}
\label{app:readings}
\label{app:cost}
In the primary evaluation ($n=200$), early commitment yields excess disagreement in support recovery of 7.7 [5.9, 9.7]\,pp (bridge comparison) and 19.3 [16.9, 21.7]\,pp (compositional). With automatic plans, acquisition-contrast reductions (Early $-$ CEA) are 19.8 [16.5, 23.1], 14.3 [10.3, 18.3], and 7.9 [4.8, 11.3]\,pp on 2Wiki, HotpotQA, and MuSiQue, respectively.

In Table~\ref{tab:allocation-controls}a, fixed/first-ready recovery is 81.8/84.0\%, with 0.91 searches for both. In (b), contention is 80.9/82.5\%, either-target recovery 84.4/86.6\%, and per-domain recovery 42.2/43.3\%. First-ready $-$ fixed differences are 0.0 [$-2.6$, 2.6]\,pp in query-target contrast, 24.0 [20.8, 27.2]\,pp in selected-branch contrast, and $-0.4$ [$-4.0$, 3.4]\,pp in accuracy. Both-target recovery is 0/1,588 for fixed priority and 0/1,587 for first-ready searches.

\end{document}